\def\ps@pprintTitle{%
 \let\@oddhead\@empty
 \let\@evenhead\@empty
 \def\@oddfoot{\centerline{\thepage}}%
 \let\@evenfoot\@oddfoot}
\def\Var{{\rm Var}}
\def\bp{ \textbf{p} }
\def\b1{ {\bm{1}} }
\def\balpha{ \bm{\alpha} }
\def\btheta{ \bm{\theta} }
\def\nn{{ \parallel   }}
\def\EE{{ \mathbb{E}  }}
\def\diag{{ \text{diag}   }}
\def\be{{ \mathbf{e}  }}
\def\bx{{ \mathbf{x}  }}
\def\by{{ \mathbf{y}  }}
\newcommand{\defequal}{ \stackrel{\rm def}{=}  }
\newtheorem{theorem}{Theorem}
\newtheorem{lemma}{Lemma}
\journal{Neural Networks}
\begin{document}

\begin{frontmatter}

\title{Information Aware Max-Norm Dirichlet Networks for Predictive Uncertainty Estimation}


\author{Theodoros Tsiligkaridis\corref{mycorrespondingauthor}}
\address{MIT Lincoln Laboratory \\ Lexington, MA 02421}
\cortext[mycorrespondingauthor]{Corresponding author}
\ead{ttsili@ll.mit.edu}




\begin{abstract}
Precise estimation of uncertainty in predictions for AI systems is a critical factor in ensuring trust and safety. Deep neural networks trained with a conventional method are prone to over-confident predictions. In contrast to Bayesian neural networks that learn approximate distributions on weights to infer prediction confidence, we propose a novel method, Information Aware Dirichlet networks, that learn an explicit Dirichlet prior distribution on predictive distributions by minimizing a bound on the expected max norm of the prediction error and penalizing information associated with incorrect outcomes. Properties of the new cost function are derived to indicate how improved uncertainty estimation is achieved. Experiments using real datasets show that our technique outperforms, by a large margin, state-of-the-art neural networks for estimating within-distribution and out-of-distribution uncertainty, and detecting adversarial examples.
\end{abstract}

\begin{keyword}
Predictive Uncertainty \sep Neural Networks \sep Deep Learning \sep Uncertainty Quantification \sep Dirichlet
\end{keyword}

\end{frontmatter}

\linenumbers

\section{Introduction} \label{intro}
Deep learning systems have achieved state-of-the-art performance in various domains \cite{LeCun:2015}. The first successful applications of deep learning include large-scale object recognition \cite{Krizhevsky:NIPS:2012} and machine translation \cite{Sutskever:2014, Wu:2016}. While further advances have achieved strong performance and often surpass human-level ability in computer vision \cite{Geirhos:NIPS:2018, He:2015, Ciresan:2012}, speech recognition \cite{Xiong:2017, Hinton:2012}, medicine \cite{Wang:2016}, bioinformatics \cite{Alipanahi:2015}, other aspects of deep learning are less well understood. Conventional neural networks (NNs) are overconfident in their predictions \cite{Guo:ICML:2017} and provide inaccurate predictive uncertainty \cite{Louizos:ICML:2017}. NNs have to be accurate, but also provide an indicator of when an error is likely to be made. Intepretability, robustness, and safety are becoming increasingly important as deep learning is deployed across various industries including healthcare, autonomous driving and cybersecurity.

Uncertainty modeling in deep learning is a crucial aspect that has been the topic of various Bayesian neural network (BNN) research studies \cite{Blundell:ICML:2015, Kingma:NIPS:2015, Gal:ICML:2016, Molchanov:ICML:2017}. BNNs capture parameter uncertainty of the network by learning distributions on weights and estimate a posterior predictive distribution by approximate integration over these parameters. The non-linearities embedded in deep neural networks make the weight posterior intractable and several tractable approximations have been proposed and trained using variational inference \cite{Blundell:ICML:2015, Kingma:NIPS:2015, Molchanov:ICML:2017, Gal:ICML:2016, Li:ICML:2017}, the Laplace approximation \cite{MacKay:1992, Ritter:2018}, expectation propagation \cite{Hernandez:2015, Sun:2017}, and Hamiltonian Monte Carlo \cite{Chen:2014}. The success of approximate BNN methods depends on how well the approximate weight distributions match their true counterparts, and their computational complexity is determined by the degree of approximation. Most BNNs take more effort to implement and are harder to train in comparison to conventional NNs. Furthermore, approximate integration over the parameter uncertainties increases the test time due to posterior sampling, and yields an approximate predictive distribution using stochastic averaging. Thus, it is of interest to develop methods that provide good uncertainty estimates while reusing the training pipeline and maintaining scalability. To this end, a simple approach was proposed that combines NN ensembles with adversarial training to improve predictive uncertainty estimates in a non-Bayesian manner \cite{Lak:2017}, but is computationally expensive. It is also known that deterministic NNs are brittle to adversarial attacks \cite{Goodfellow:2014, Kurakin:2017}. Predictive uncertainty can be used to reason about neural network predictions and detect when a network is likely to make an error, identify anomalous examples, and detect adversarial attacks. Recent works \cite{Sensoy:NIPS:2018, Malinin:2019} explicitly use the Dirichlet distribution to model distributions of class compositions and propose to learn its parameters by training deterministic neural networks. This approach yields closed-form predictive distributions and outperforms BNNs in uncertainty quantification for out-of-distribution (OOD) and adversarial queries. However, these methods require OOD data during training which is an unrealistic assumption, and uncertainty estimation performance can be improved for out-of-distribution and adversarial examples. Furthermore, uncertainty quantification for within-distribution queries were not studied.

In this paper, we propose Information Aware Dirichlet (IAD) networks that deliver accurate predictive uncertainty by learning distributions on class probability vectors through minimizing a regularized $L_{\infty}$ norm of the prediction error under a Dirichlet prior distribution. Our method improves upon the mean-square error loss used in \cite{Sensoy:NIPS:2018} because the $L_2$ norm is sensitive to outlier scores while the $L_\infty$ norm minimizes the cost of the highest prediction error among the classes which further tends to yield higher uncertainties for misclassifications because the effect of favoring one class over others is mitigated. In addition, we improve upon the KL-loss in \cite{Malinin:2019} because the proposed loss does not enforce the hard constraint that all examples yield sharp Dirichlet priors and do not rely on OOD data at training time.

Our contributions include the following:
\begin{itemize}
	\item First, a new training loss is proposed based on minimizing an approximation to the expected $L_{\infty}$ norm of the prediction error under a Dirichlet prior distribution. A closed-form approximation to this loss is derived.
	\item Second, a regularization loss is proposed to align the concentration parameters to an information direction that minimizes information captured associated with incorrect outcomes.
	\item Third, an analysis is provided that shows how properties of the new loss function improve uncertainty estimation.
	\item Finally, we demonstrate on real datasets where our technique improves upon uncertainty quantification for within-distribution, out-of-distribution and adversarial examples.
\end{itemize}

\section{Dirichlet Prior Networks} \label{sec:dpn}

Outputs of neural networks for classification tasks are probability vectors over classes. The basis of our approach is an explicit prior on class probability distributions \cite{Malinin:2019}. Given dataset $\mathcal{D}=\{(\bx_i,\by_i)\}$, the class probability vectors for sample $i$ given by $\bp_i$ are modeled as random vectors drawn from a Dirichlet distribution \cite{Mauldon:1959, Mosimann:1962} $f(\bp_i|\bx_i;\btheta)=f(\bp_i;\balpha_i)$ conditioned on the input $\bx_i$ and weights $\btheta$. Dirichlet prior networks have inputs $\bx_i$ and outputs concentration parameters $\balpha_i$.

Given the probability simplex as $\mathcal{S} = \left\{(p_1,\dots,p_K): p_i \geq 0, \sum_i p_i=1\right\}$, the Dirichlet distribution is a probability density function on vectors $\bp \in \mathcal{S}$ given by
\begin{equation} \label{eq:Dirichlet}
	f(\bp;\balpha) = \frac{1}{B(\balpha)} \prod_{j=1}^K p_j^{\alpha_j-1}
\end{equation}
where $B(\balpha)=\prod_{j=1}^K \Gamma(\alpha_j)/\Gamma(\alpha_0)$ is the multivariate Beta function. It is characterized by concentration parameters $\balpha = (\alpha_1,\dots,\alpha_K)$, here assumed to be larger than unity. \footnote{The reason for this constraint is that the Dirichlet distribution becomes inverted for $\alpha_j<1$ concentrating in the corners of the simplex and along its boundaries.}

The predictive uncertainty of a classification model trained over this dataset can be expressed as:
\begin{align*}
	&P(y=j|\bx^*,\mathcal{D}) = \int P(y=j|\bx^*,\btheta) p(\btheta|\mathcal{D}) d\btheta \\
		&\quad = \int \int P(y=j|\bp) f(\bp|\bx^*,\btheta) d\bp \cdot p(\btheta|\mathcal{D}) d\btheta \\
		&\quad = \int P(y=j|\bp) f(\bp|\bx^*,\mathcal{D}) d\bp
\end{align*}
The terms above represent data uncertainty, $P(y=j|\bp)$, distribution uncertainty, $f(\bp|\bx^*,\btheta)$, and model uncertainty, $p(\btheta|\mathcal{D})$. The Bayesian hierarchy implies that model uncertainty affects distributional uncertainty, which as a result influences the data uncertainty estimates. In our framework, the additional level of distributional uncertainty is incorporated to control the information spread over the simplex by learning $f(\bp|\bx^*,\btheta)$ in a robust manner during the training procedure. This in turn regularizes the density $f(\bp|\bx^*,\mathcal{D})$ to produce improved predictive uncertainty estimates.

Since the posterior $p(\btheta|\mathcal{D})$ is intractable, approximate variational inference methods may be used in similar spirit to \cite{Blundell:ICML:2015, Gal:ICML:2016} to estimate it. In addition, ensemble approaches are computationally expensive. For clarity in this paper, we assume a point-estimate of the weight parameters is sufficient given a large training set and proper regularization control, which yields $f(\bp|\bx^*,\mathcal{D}) \approx f(\bp|\bx^*,\bar{\btheta})$. This simplifying approximation was also made in recent works \cite{Sensoy:NIPS:2018, Malinin:2019}.

Conventional NNs for classification trained with a cross-entropy loss with a softmax output layer provide a \textit{point estimate} of the predictive class probabilities of each example and do not have a handle on the underlying uncertainty. Cross-entropy training may be interpreted as maximum likelihood estimation which cannot infer predictive distribution variance. This is the prevalent setting for training neural networks for classification, which tends to produce overconfident wrong predictions.

 Concentration parameters may be interpreted as how likely a class is relative to others. In the special case of the all-ones $\balpha$ vector, the distribution becomes uniform over the probability simplex (see Fig. \ref{fig:dirichlet_graphic}(d)). The mean of the proportions is given by $\hat{p}_j = \alpha_j/\alpha_0$, where $\alpha_0=\sum_j \alpha_j$ is the Dirichlet strength.

The Dirichlet distribution is conjugate to the multinomial distribution with posterior parameters updated as $\alpha_j'=\alpha_j+y_j$ for a multinomial sample $\by=(y_1,\dots,y_K)$. For a single sample, $y_j=I_{\{j=c\}}$, where $c$ is the index of the correct class. A Dirichlet neural network's output layer parametrizes the simplex distribution representing the spread of class assignment probabilities. The softmax classification layer is replaced by a softplus activation layer that outputs non-negative continuous values, obtaining
\begin{equation*}
	\balpha = g_\alpha(\bx^*; \bar{\btheta}) + 1
\end{equation*}
that parametrize the density $f(\bp|\bx^*, \bar{\btheta}) = f(\bp; \balpha)$. The posterior distribution $P(y|\bx^*,\bar{\btheta})$ is given by:
\begin{equation*}
	P(y=j|\bx^*;\bar{\btheta}) = \EE_{\bp\sim f(\bp|\bx^*;\bar{\btheta})}[P(y=j|\bp)] = \frac{\alpha_j}{\alpha_0}
\end{equation*}


The concentration parameters determine the shape of the Dirichlet distribution on the probability simplex, as is visualized in Fig. \ref{fig:dirichlet_graphic} for $K=3$. Fig. \ref{fig:dirichlet_graphic}(a) shows a confident prediction characterized by low entropy, (b) shows a more challenging prediction that has higher uncertainty, (c) shows a prediction characterized by high data uncertainty due to class overlap, and (d) shows a flat Dirichlet distribution that arises for an out-of-distribution example.
\begin{figure}[ht]
	\centering
		\includegraphics[width=0.85\textwidth]{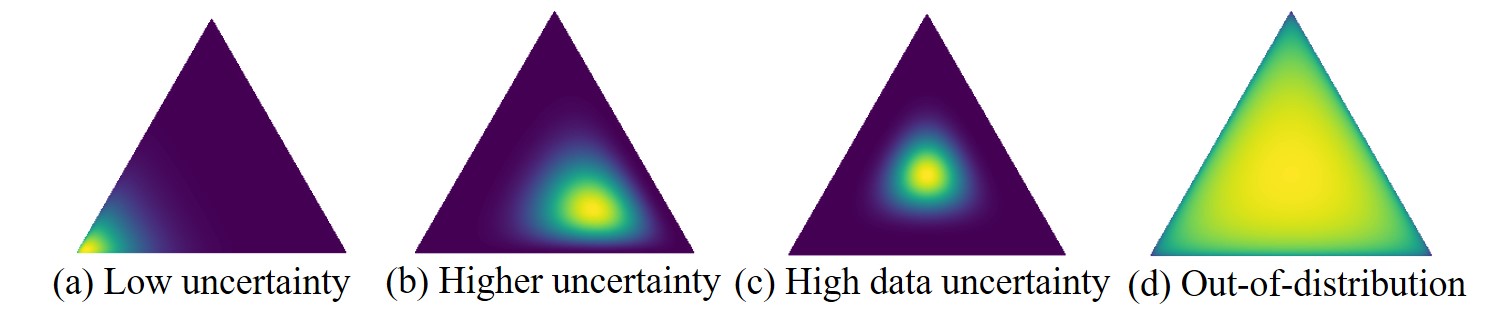}
	\caption{\small Illustration of Dirichlet distribution over categorical class probability distributions (similar to \cite{Malinin:2019}). }
	\label{fig:dirichlet_graphic}
\end{figure}

Predictive entropy measures total uncertainty and may be decomposed into epistemic (or knowledge) uncertainty (arises due to model's difficulty in understanding inputs) and aleatoric (or data) uncertainty (arises due to class-overlap and noise) \cite{Malinin:2019}, given by:
\begin{equation*}
	H(P(y|\bx^*, \bar{\btheta})) = H(\EE_{\bp\sim f(\cdot|\bx^*;\bar{\btheta})}[P(y|\bp)]) = -\sum_j \frac{\alpha_j}{\alpha_0} \log \frac{\alpha_j}{\alpha_0}
\end{equation*}
The mutual information between the labels $y$ and the class probability vector $\bp$, $I(y,\bp|\bx^*;\bar{\btheta})$, captures epistemic uncertainty, and can be calculated by subtracting the expected data uncertainty from the total uncertainty:
\begin{align*}
	I&(y,\bp|\bx^*;\bar{\btheta}) \\
		&= H(\EE_{\bp\sim f(\cdot|\bx^*;\bar{\btheta})}[P(y|\bp)]) - \EE_{\bp\sim f(\cdot|\bx^*,\bar{\btheta})}[H(P(y|\bp))] \\
		&= -\sum_j \frac{\alpha_j}{\alpha_0} \left( \log \frac{\alpha_j}{\alpha_0} - \psi(\alpha_j+1) + \psi(\alpha_0+1) \right)
\end{align*}
where $\psi(\cdot)$ denotes the digamma function. This metric explicitly captures the spread due to distributional uncertainty and is particularly useful for detection of out-of-distribution and adversarial examples. A variation of it was used in the context of active learning \cite{Houlsby:2011}. 

\section{Learning Framework}

\subsection{Classification Loss}
Available are one-hot encoded labels $\by_i$ of examples $\bx_i$ with correct class $c_i$. Treating the Dirichlet distribution $f_{\balpha_i}(\bp_i)$ as a prior on the multinomial likelihood function $\prod_k p_{ik}^{y_{ik}}$, one can minimize the negative log-marginal likelihood:
\begin{align*}
	-\log\left( \EE_{\bp_i\sim f(\cdot;\balpha_i)}\left[ \prod_k p_{ik}^{y_{ik}} \right] \right) = -\log\left( \frac{\alpha_{i,c_i}}{\sum_{j}\alpha_{ij}} \right)
\end{align*}
or the Bayes risk of the cross-entropy loss:
\begin{equation*}
	\EE_{\bp_i\sim f(\cdot;\balpha_i)} \left[ -\sum_k y_{ik} \log p_{ik} \right] = -\left( \psi(\alpha_{i,c_i}) - \psi(\sum_{j}\alpha_{ij}) \right)
\end{equation*}
It was observed in \cite{Sensoy:NIPS:2018} that these loss functions generate excessively high belief masses for classes hurting quantification of uncertainty and are less stable than minimizing the sum of squares of prediction errors instead. This can be attributed to the nature of these loss functions encouraging the maximization of correct class likelihoods.

Unlike conventional cross-entropy training that only seeks to maximize the correct class likelihood, we propose a distance-based objective that minimizes the expected prediction error capturing errors across all classes simultaneously by learning the appropriate Dirichlet concentration parameters that govern the spread of class probability vectors. We propose to minimize the Bayes risk of the prediction error in $L_\infty$ space, which we approximate by relaxing the norm to the $L_p$ space and further use Jensen's inequality as
\begin{align}
	\EE_{\bp_i\sim f(\cdot;\balpha_i)} \nn \by_i-\bp_i \nn_\infty
	    &\leq \EE_{\bp_i\sim f(\cdot;\balpha_i)} \nn \by_i-\bp_i \nn_p \nonumber \\
	    &\leq \left( \EE_{\bp_i\sim f(\cdot;\balpha_i)}[ \nn \by_i-\bp_i \nn_p^p ] \right)^{1/p} \nonumber \\
		&= \left( \EE[(1-p_{i,c_i})^p] + \sum_{j\neq c_i} \EE[p_{ij}^p] \right)^{1/p} =: \mathcal{F}_i(\btheta) \label{eq:Lp_loss}
\end{align}
where we made use of the norm inequality $\nn \be_i \nn_\infty \leq \nn \be_i \nn_p$. The larger $p$, the tighter the $L_p$ norm approximates the max-norm of the prediction error. As the expectation of the max-norm is difficult to directly optimize, Jensen's inequality yields a tractable upper bound that encompasses higher-order moments of the Dirichlet experiment generated by the NN as opposed to just the bias and variance for the $L_2$ case. 
In practice, $p$ is chosen to strike a balance between the correct prediction confidence and uncertainties of errors/out-of-distribution queries.

Our loss improves upon the mean-square-error loss $\EE_{\bp_i\sim f(\cdot;\balpha_i)}[\nn \by_i-\bp_i \nn_2^2]$ proposed in \cite{Sensoy:NIPS:2018} because the $L_\infty$ norm minimizes the cost of the highest prediction error among the classes, while the $L_2$ norm minimizes the sum-of-squares easily affected by outlier scores, $\nn \be_i \nn_\infty \leq \nn \be_i \nn_p \leq \nn \be_i \nn_2$ for $p>2$, and as a result, when errors are made, the uncertainty is expected to be higher as we mitigate the effect of favoring one class more than others. It also improves upon the proposed KL-loss in \cite{Malinin:2019} $D_{KL}( f(\cdot;\alpha_i) \parallel f(\cdot;(\beta+1)\by_i + (1-\by_i)))$ for some arbitrary target parameter $\beta$ as our loss does not require specifying a target true class concentration parameter, and instead tries to fit the best Dirichlet prior to each training example (since one cannot expect all examples to yield highly-concentrated Dirichlet prior distributions); perhaps more importantly we do not rely on access to OOD data at training time.

To calculate each term in $\mathcal{F}_i(\btheta)$, we note $1-p_{i,c_i}$ has a distribution $\text{Beta}(\alpha_{i,0}-\alpha_{i,c_i},\alpha_{i,c_i})$ due to mirror symmetry, and $p_{ij}$ has distribution $\text{Beta}(\alpha_{i,j},\alpha_{i,0}-\alpha_{i,j})$. Marginals of the Dirichlet distribution are Beta random variables, $p_j \sim \text{Beta}(\alpha_j,\alpha_0-\alpha_j)$ with support on $[0,1]$. The $q$-th moment of the Beta distribution $\text{Beta}(a,b)$ is given by
\begin{equation} \label{eq:moment_beta}
	\EE[p^q] = \int_0^1 p^q \frac{p^{a-1} (1-p)^{b-1}}{B_u(a,b)} dp = \frac{B_u(a+q,b)}{B_u(a,b)}
\end{equation}
where $B_u(a,b) = \Gamma(a)\Gamma(b)/\Gamma(a+b)$ is the univariate Beta function. Using the moment expression (\ref{eq:moment_beta}):
\begin{align*}
	\mathcal{F}_i(\btheta) &= \Bigg( \frac{B_u(\alpha_{i,0}-\alpha_{i,c_i}+p,\alpha_{i,c_i})}{B_u(\alpha_{i,0}-\alpha_{i,c_i},\alpha_{i,c_i})} + \sum_{j\neq c_i} \frac{B_u(\alpha_{i,j}+p,\alpha_{i,0}-\alpha_{i,j})}{B_u(\alpha_{i,j},\alpha_{i,0}-\alpha_{i,j})} \Bigg)^{\frac{1}{p}} \\
		&= \left( \frac{\Gamma(\alpha_0)}{\Gamma(\alpha_0+p)} \right)^{\frac{1}{p}} \left( \frac{\Gamma\left(\sum\limits_{k\neq c} \alpha_k+p\right)}{\Gamma\left(\sum\limits_{k\neq c} \alpha_k \right)} + \sum_{k\neq c} \frac{\Gamma(\alpha_k+p)}{\Gamma(\alpha_k)} \right)^{\frac{1}{p}}
\end{align*}

The following theorem shows that the loss function $\mathcal{F}_i$ has the correct behavior as the information flow increases towards the correct class which is consistent when an image sample of that class is observed in a Bayesian Dirichlet experiment and hyperparameters are incremented (see Section \ref{sec:dpn}).
\begin{theorem} \label{thm:loss_decay_correctclass}
	For a given sample $\bx_i$ with correct label $c$, the loss function $\mathcal{F}_i$ is strictly convex and decreases as $\alpha_c$ increases (and increases when $\alpha_c$ decreases).
\end{theorem}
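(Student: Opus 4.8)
The plan is to exploit the fact that, when $\alpha_c$ varies while the remaining concentration parameters are held fixed, almost all of $\mathcal{F}_i$ is constant. Write $S \defequal \sum_{k\neq c}\alpha_k$, so that $S$, the ratio $\Gamma(S+p)/\Gamma(S)$, and the sum $\sum_{k\neq c}\Gamma(\alpha_k+p)/\Gamma(\alpha_k)$ are all independent of $\alpha_c$; set $C \defequal \Gamma(S+p)/\Gamma(S) + \sum_{k\neq c}\Gamma(\alpha_k+p)/\Gamma(\alpha_k) > 0$. Since $\alpha_0 = \alpha_c + S$, the closed form derived above becomes $\mathcal{F}_i = C^{1/p}\,\big(\Gamma(\alpha_c+S)/\Gamma(\alpha_c+S+p)\big)^{1/p}$, a strictly positive constant times $\phi(u)$ evaluated at $u \defequal \alpha_c + S$, where $\phi(u) \defequal \big(\Gamma(u)/\Gamma(u+p)\big)^{1/p}$. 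Because $\alpha_c \mapsto u$ is an affine bijection with positive slope, it suffices to show that $\phi$ is strictly decreasing and strictly convex on $(0,\infty)$.

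For monotonicity I would pass to logarithms: $\log\phi(u) = \tfrac1p\big(\log\Gamma(u) - \log\Gamma(u+p)\big)$, hence $(\log\phi)'(u) = \tfrac1p\big(\psi(u) - \psi(u+p)\big)$. Since the digamma function is strictly increasing on $(0,\infty)$ (its derivative is $\psi'(u)=\sum_{n\geq0}(u+n)^{-2}>0$) and $p>0$, we get $(\log\phi)'(u) < 0$, so $\phi'(u) = \phi(u)(\log\phi)'(u) < 0$. Thus $\mathcal{F}_i$ is strictly decreasing in $\alpha_c$, which also yields the parenthetical claim that it increases as $\alpha_c$ decreases.

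For convexity I would show that $\phi$ is in fact strictly log-convex. Differentiating once more, $(\log\phi)''(u) = \tfrac1p\big(\psi'(u) - \psi'(u+p)\big)$, and the trigamma function $\psi'(u) = \sum_{n\geq0}(u+n)^{-2}$ is strictly decreasing, so $(\log\phi)''(u) > 0$ for $p>0$. Strict log-convexity implies strict convexity: writing $\phi = e^{\log\phi}$ gives $\phi'' = \phi\big((\log\phi)'' + ((\log\phi)')^2\big) > 0$. Hence $\phi$, and therefore $\mathcal{F}_i$ viewed as a function of $\alpha_c$, is strictly convex.

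There is essentially no hard step here: the single observation carrying the argument is that the dependence of $\mathcal{F}_i$ on $\alpha_c$ collapses onto the one ratio $\Gamma(\alpha_0)/\Gamma(\alpha_0+p)$ with $\alpha_0 = \alpha_c + S$; after that everything reduces to the standard monotonicity of the digamma and trigamma functions together with the elementary fact that a strictly log-convex function is strictly convex. The only point requiring a little care is bookkeeping — verifying that the leftover factor $C^{1/p}$ is strictly positive and $\alpha_c$-independent, so that it neither changes signs nor enters the derivatives.
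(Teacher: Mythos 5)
Your proposal is correct and follows essentially the same route as the paper: isolate the $\alpha_c$-dependence in the single factor $\left(\Gamma(\alpha_0)/\Gamma(\alpha_0+p)\right)^{1/p}$, show its logarithm has negative first and positive second derivative via $\psi(\alpha_0)-\psi(\alpha_0+p)<0$ and $\psi^{(1)}(\alpha_0)-\psi^{(1)}(\alpha_0+p)>0$, and conclude via $f''=e^{g}\left(g''+(g')^2\right)>0$. The only cosmetic difference is that you justify the digamma/trigamma monotonicity from their series representations, whereas the paper invokes its Lemmas 1 and 2 proved from Gauss integral representations.
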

Theorem \ref{thm:loss_decay_correctclass} shows that our objective function encourages the learned distribution of probability vectors to concentrate towards the correct class, consistent with Dirichlet sampling experiments. While increasing information flow towards the correct class reduces the loss, it is also important for the loss to capture elements of incorrect classes. It is expected that increasing information flow towards incorrect classes increases uncertainty.
\begin{theorem} \label{thm:loss_attn}
	For a given sample $\bx_i$ with correct label $c$, the loss function $\mathcal{F}_i$ is increasing in $\alpha_j$ for any $j\neq c$ as $\alpha_j$ grows.
\end{theorem}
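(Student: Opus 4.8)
\emph{Proof proposal.}
Since $t\mapsto t^{1/p}$ is strictly increasing, it is equivalent to show that $\mathcal{G}_i\defequal\mathcal{F}_i^{\,p}$ is increasing in $\alpha_j$. Write $\phi(x)\defequal\Gamma(x+p)/\Gamma(x)$ and $s\defequal\sum_{k\neq c}\alpha_k=\alpha_0-\alpha_c$, so that the closed form derived above reads
\[
\mathcal{G}_i=\frac{\phi(s)+\sum_{k\neq c}\phi(\alpha_k)}{\phi(\alpha_0)} .
\]
The first step is to collect the analytic facts about $\phi$. Setting $\rho(x)\defequal(\log\phi)'(x)=\psi(x+p)-\psi(x)$, one has $\rho>0$ and $\rho'(x)=\psi'(x+p)-\psi'(x)<0$ since the trigamma function is decreasing; hence $\rho$ is positive and strictly decreasing, $\phi$ is strictly increasing and log-concave, $\phi'=\phi\rho$ is positive and increasing, and $\phi''=\phi(\rho^2+\rho')\ge0$ (transparent for integer $p$, where $\phi(x)=\prod_{m=0}^{p-1}(x+m)$ gives $\rho^2+\rho'=\sum_{m\ne l}\frac1{(x+m)(x+l)}$), i.e.\ $\phi$ is convex on $(0,\infty)$ with $\phi(0^+)=0$. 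Convexity together with $\phi(0^+)=0$ yields superadditivity, $\phi(a+b)\ge\phi(a)+\phi(b)$, so in particular $\sum_{k\neq c}\phi(\alpha_k)\le\phi(s)$.

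Second, I differentiate. A perturbation $\alpha_j\mapsto\alpha_j+t$ with $j\neq c$ raises $s$ and $\alpha_0$ by the same amount and leaves the other $\alpha_k$'s fixed, so, after clearing $\phi(\alpha_0)>0$ and using $\phi'=\phi\rho$, the inequality $\partial_{\alpha_j}\mathcal{G}_i\ge0$ is equivalent to
\[
\phi(s)\rho(s)+\phi(\alpha_j)\rho(\alpha_j)\;\ge\;\rho(\alpha_0)\Big(\phi(s)+\sum_{k\neq c}\phi(\alpha_k)\Big).
\]
Two of the three summands of $\mathcal{G}_i$ are, on their own, already increasing in $\alpha_j$: the derivatives of $\log\!\big(\phi(s)/\phi(\alpha_0)\big)$ and $\log\!\big(\phi(\alpha_j)/\phi(\alpha_0)\big)$ are $\rho(s)-\rho(\alpha_0)>0$ and $\rho(\alpha_j)-\rho(\alpha_0)>0$ because $s,\alpha_j\le\alpha_0$ and $\rho$ is decreasing; the displayed inequality says precisely that this increase outweighs the pull-down from the remaining terms $\phi(\alpha_k)/\phi(\alpha_0)$, $k\neq c,j$, which decrease.

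Securing that balance is the main obstacle. The two obvious levers — superadditivity ($\sum_{k\neq c}\phi(\alpha_k)\le\phi(s)$) and the decay of $\rho$ ($\rho(\alpha_0)\le\rho(s)$, $\rho(\alpha_0)\le\rho(\alpha_k)$) — are, applied separately, too lossy: each reduces the claim to something of the shape $\phi'(\alpha_j)\gtrsim\rho(\alpha_0)\cdot(\text{large weight})$, which does fail when $\alpha_j$ is small while $s$ or $\alpha_c$ is large (one checks, e.g.\ for $p=2$ and concentration $(1,1,10)$ on (correct, $j$, other), that $\mathcal{G}_i$ actually dips before rising). So the robust form of the assertion — and the one consistent with the phrasing ``as $\alpha_j$ grows'' and with the surrounding discussion that uncertainty should climb toward its maximum as mass piles onto a wrong class — is the large-$\alpha_j$ regime. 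There, with the remaining parameters fixed, $s,\alpha_0\sim\alpha_j$, $\phi(x)\sim x^{p}$, $\rho(x)\sim p/x$, and expanding $\mathcal{G}_i$ in powers of $\alpha_j^{-1}$ yields
\[
\mathcal{G}_i=2-\frac{p\big(\sum_{k\neq c,\,k\neq j}\alpha_k+2\alpha_c\big)}{\alpha_j}+O(\alpha_j^{-2}),
\]
so $\partial_{\alpha_j}\mathcal{G}_i=\Theta(\alpha_j^{-2})>0$ for all large $\alpha_j$ and $\mathcal{G}_i\to2$ from below; hence $\mathcal{F}_i$ is eventually strictly increasing in $\alpha_j$ and approaches its global maximum $2^{1/p}$ — exactly the claimed effect. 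A fully global monotone version, if desired, would keep the two left-hand terms of the displayed inequality coupled and combine superadditivity with the convexity of $\phi'$, under the mild extra hypothesis that $\alpha_c$ is bounded away from its lower limit; I would isolate that as a separate, more computational lemma.
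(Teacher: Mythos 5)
Your proposal is correct and takes essentially the same route as the paper: reduce to $\mathcal{G}_i=\mathcal{F}_i^{\,p}$, differentiate to obtain exactly the paper's key inequality $\mu(s)\nu(s)+\mu(\alpha_j)\nu(\alpha_j)>\bigl[\mu(s)+\sum_{k\neq c}\mu(\alpha_k)\bigr]\nu(\alpha_0)$ (your $\phi,\rho$ are the paper's $\mu,\nu$), and prove it only in the large-$\alpha_j$ regime, which matches the theorem's ``as $\alpha_j$ grows'' and the paper's own acknowledgment of the initial dip. The only difference is how the last step is closed: you use an explicit $1/\alpha_j$ expansion showing $\mathcal{G}_i\to 2$ from below with positive derivative of order $\alpha_j^{-2}$, whereas the paper argues qualitatively that the residual term $\bigl[\sum_{l\neq c,j}\mu(\alpha_l)\bigr]\nu(\alpha_0)$ vanishes while the favorable terms dominate.
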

Theorem \ref{thm:loss_attn} implies that through minimizing the loss function the model avoids assigning high concentration parameters to incorrect classes as the model cannot explain observations that are assigned incorrect outcomes. The proofs are included in the Appendix.

\subsection{Information Regularization Loss}
The classification loss can discover interesting patterns in the data to achieve high classification accuracy. However, the network may learn that certain patterns lead to strong information flow towards incorrect classes, e.g., a common pattern of one correct class might contribute to a large $\alpha_j$ associated with an incorrect class. While for accuracy this might not be an issue as long as $\alpha_c$ is larger than the incorrect $\alpha_j$, it does affect its predictive uncertainty. Thus, it is of interest to minimize the contributions of concentration parameters associated with incorrect outcomes.

Given the auxiliary vector $\tilde{\balpha}_i = (1-\by_i) \odot \balpha_i + \by_i$ formed by nulling out the correct class concentration parameter $\alpha_{c_i}$, we minimize the following distance function that aligns the concentration parameter vector $\tilde{\balpha}$ towards unity:
\begin{align}
	\mathcal{R}_i &\defequal \frac{1}{2} (\tilde{\balpha}_i-\b1)^T \diag(J(\tilde{\balpha}_i)) (\tilde{\balpha}_i-\b1) \nonumber \\
		&= \frac{1}{2} \sum_{j \neq c_i} (\alpha_{ij}-1)^2 (\psi^{(1)}(\alpha_{ij})-\psi^{(1)}(\tilde{\alpha}_{i0})) \label{eq:inf_reg}
\end{align}
where $\psi^{(1)}(z)=\frac{d}{dz}\psi(z)$ is the polygamma function of order $1$, and $J(\tilde{\balpha})$ denotes the Fisher information matrix $\EE[\nabla \log f(\bp; \tilde{\balpha}) \nabla \log f(\bp; \tilde{\balpha})^T]=-\EE[\nabla^2 \log f(\bp; \tilde{\balpha})]$. We remark that (\ref{eq:inf_reg}) is not a quadratic function in $\alpha_{ij}$ due to the nonlinearity of the polygamma functions and the fact that terms are tied together through the constraint $\tilde{\alpha}_{i0}=1+\sum_{j\neq c} \alpha_{ij}$. This regularization is related to a local approximation of the R\'enyi information divergence \cite{Renyi:1961, Erven:2014} of the Dirichlet distribution $f(\bp; \tilde{\balpha})$ from the uniform Dirichlet $f(\bp; \b1)$ given by
\begin{align*}
		&D_u^R(f(\bp; \tilde{\balpha}) \parallel f(\bp; \b1)) \cong \frac{u}{2} (\tilde{\balpha}-\b1)^T J(\tilde{\balpha}) (\tilde{\balpha}-\b1) \\
		&= \frac{u}{2} \Big[ \sum_{j\neq c}(\alpha_j-1)^2 (\psi^{(1)}(\alpha_j)-\psi^{(1)}(\tilde{\alpha}_0)) \\
		&\quad - \psi^{(1)}(\tilde{\alpha}_0) \sum_{i\neq j,i\neq c,j\neq c} (\alpha_i-1)(\alpha_j-1) \Big]
\end{align*}
in the local regime $\nn \tilde{\balpha}-\b1\nn_2^2= \sum_{j\neq c}(\alpha_j-1)^2 \to 0$. This approximation follows from \cite{Haussler:1997} (p. 2472) after using the second-order Taylor's expansion and substituting the Fisher information matrix $J(\tilde{\balpha}) = \text{diag}(\{\psi^{(1)}(\tilde{\alpha}_i)\}_{i=1}^K) - \psi^{(1)}(\tilde{\alpha}_0) 1_{K\times K}$. The next theorem shows a desirable monotonicity property of the information regularization loss (\ref{eq:inf_reg}).
\begin{theorem} \label{thm:monotonicity_inf_reg}
	The information regularization loss $\mathcal{R}(\alpha)$ given in (\ref{eq:inf_reg}) is increasing in $\alpha_j$ for $j\neq c$.
\end{theorem}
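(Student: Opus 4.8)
The plan is to fix an incorrect class $j\ne c$ and show $\partial\mathcal R/\partial\alpha_j\ge 0$, treating $\mathcal R$ in (\ref{eq:inf_reg}) as a function of $\alpha_j$ with the other concentration parameters held fixed ($\alpha_c$ does not enter $\mathcal R$). Writing $\tilde\alpha_0=1+\sum_{k\ne c}\alpha_k$, so that $\partial\tilde\alpha_0/\partial\alpha_j=1$, differentiation gives
\[
2\,\frac{\partial\mathcal R}{\partial\alpha_j}
= \underbrace{2(\alpha_j-1)\big(\psi^{(1)}(\alpha_j)-\psi^{(1)}(\tilde\alpha_0)\big)}_{\ge 0}
+\underbrace{(\alpha_j-1)^2\psi^{(2)}(\alpha_j)}_{\le 0}
-\underbrace{\psi^{(2)}(\tilde\alpha_0)\sum_{k\ne c}(\alpha_k-1)^2}_{\ge 0},
\]
where $\psi^{(2)},\psi^{(3)}$ are the polygamma functions of orders $2,3$; the indicated signs follow from $\psi^{(1)}$ decreasing, $\psi^{(2)}$ negative and increasing, and $\psi^{(3)}$ positive on $(0,\infty)$, together with $1\le\alpha_j<\tilde\alpha_0$. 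Only the middle term is adverse. The key step is the elementary bound $\sum_{k\ne c}(\alpha_k-1)^2\ge(\alpha_j-1)^2$: peeling $(\alpha_j-1)^2\big(-\psi^{(2)}(\tilde\alpha_0)\big)$ off the last term and discarding the nonnegative remainder, it suffices (dividing by $\alpha_j-1>0$, the case $\alpha_j=1$ being trivial) to prove
\[
2\big(\psi^{(1)}(\alpha_j)-\psi^{(1)}(\tilde\alpha_0)\big)+(\alpha_j-1)\big(\psi^{(2)}(\alpha_j)-\psi^{(2)}(\tilde\alpha_0)\big)\ \ge\ 0 .
\]

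Next I would use $(\psi^{(1)})'=\psi^{(2)}$ and $(\psi^{(2)})'=\psi^{(3)}$ to write this left-hand side as $\int_{\alpha_j}^{\tilde\alpha_0}\big[-2\psi^{(2)}(x)-(\alpha_j-1)\psi^{(3)}(x)\big]\,dx$. Because $\psi^{(3)}>0$ and $\alpha_j-1\le x-1$ on the whole interval of integration, the integrand is at least $-2\psi^{(2)}(x)-(x-1)\psi^{(3)}(x)$, so the theorem reduces to the scalar inequality
\[
(\star)\qquad -2\psi^{(2)}(x)\ \ge\ (x-1)\,\psi^{(3)}(x),\qquad x\ge 1 .
\]
I would establish $(\star)$ from $-\psi^{(2)}(x)=2\sum_{n\ge0}(x+n)^{-3}$ and $\psi^{(3)}(x)=6\sum_{n\ge0}(x+n)^{-4}$ together with the two-sided estimates that the alternating Euler--Maclaurin expansion supplies for such completely monotone sums, in particular $-\psi^{(2)}(x)>x^{-2}+x^{-3}+\tfrac12x^{-4}-\tfrac16x^{-6}$ and $\psi^{(3)}(x)<2x^{-3}+3x^{-4}+2x^{-5}$ (known sharp polygamma bounds could be quoted instead). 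Substituting these, $-2\psi^{(2)}(x)-(x-1)\psi^{(3)}(x)$ exceeds $x^{-3}+2x^{-4}+2x^{-5}-\tfrac13x^{-6}=x^{-6}\big(x^3+2x^2+2x-\tfrac13\big)>0$ for $x\ge1$. Running the chain of inequalities backwards then yields $\partial\mathcal R/\partial\alpha_j\ge0$, strict when $\alpha_j>1$.

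The main obstacle is $(\star)$. It is delicate because it is asymptotically tight: $-2\psi^{(2)}(x)/\psi^{(3)}(x)=x-\tfrac12+O(x^{-1})$, which lies only a bounded distance above the required $x-1$. Crude bounds such as $x^{-2}\le-\psi^{(2)}(x)$ or $\psi^{(3)}(x)\le 2x^{-3}+6x^{-4}$ are therefore not enough — they already fail the comparison for $x\ge2$ — so the Euler--Maclaurin corrections must be carried to the order shown. No control of second derivatives of $\mathcal R$ is needed, since the theorem asserts only monotonicity.
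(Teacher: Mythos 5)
Your proposal is correct, and its skeleton coincides with the paper's: you split $\partial\mathcal R/\partial\alpha_j$ into the contribution of the $j$-th summand and the contribution of the other summands through $\tilde\alpha_0$, and the latter piece, $-\tfrac12\psi^{(2)}(\tilde\alpha_0)\sum_{k\neq c,k\neq j}(\alpha_k-1)^2\ge 0$, is exactly the paper's $\mathcal R_{\neq k}$ term. Where you genuinely differ is at the remaining (diagonal) piece: after your peeling step, the inequality you must prove, $2\bigl(\psi^{(1)}(\alpha_j)-\psi^{(1)}(\tilde\alpha_0)\bigr)+(\alpha_j-1)\bigl(\psi^{(2)}(\alpha_j)-\psi^{(2)}(\tilde\alpha_0)\bigr)\ge 0$, is precisely the statement that $q(\alpha)=(\alpha-1)^2\bigl(\psi^{(1)}(\alpha)-\psi^{(1)}(\alpha+z)\bigr)$ is increasing, which the paper simply asserts without proof. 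You supply the missing argument: rewriting the expression as $\int_{\alpha_j}^{\tilde\alpha_0}\bigl[-2\psi^{(2)}(x)-(\alpha_j-1)\psi^{(3)}(x)\bigr]dx$, using $\alpha_j-1\le x-1$, and reducing to the scalar inequality $-2\psi^{(2)}(x)\ge(x-1)\psi^{(3)}(x)$ for $x\ge 1$, which you correctly identify as asymptotically tight and verify with sharp polygamma estimates (your algebra checks out, and your derivative formula agrees with the correct one after recombining the $k=j$ term of the last sum with the $\psi^{(2)}(\alpha_j)$ term). So your proof is strictly more complete than the paper's at the crucial step; what it costs is the reliance on the quoted Euler--Maclaurin-type bounds $-\psi^{(2)}(x)>x^{-2}+x^{-3}+\tfrac12x^{-4}-\tfrac16x^{-6}$ and $\psi^{(3)}(x)<2x^{-3}+3x^{-4}+2x^{-5}$, which are standard (sign-of-first-omitted-term bounds for the Bernoulli asymptotic series, available in the literature) but do need to be cited or proved for all $x\ge1$, not just asymptotically; alternatively, your inequality $(\star)$ could be attacked directly from the integral representations $(-1)^{n+1}\psi^{(n)}(x)=\int_0^\infty \frac{t^n e^{-xt}}{1-e^{-t}}\,dt$ in the spirit of the paper's Lemmas 1--2, which would keep the appendix self-contained. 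Also note your argument needs only $\tilde\alpha_0>\alpha_j$ (i.e.\ $z>0$) and $\alpha_j\ge1$, slightly weaker hypotheses than the paper's stated $z>1$.
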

Theorem \ref{thm:loss_attn} and \ref{thm:monotonicity_inf_reg} imply that the strength of concentration parameters associated with misleading outcomes is expected to decrease during training. This preferable behavior of our objective function leads to higher uncertainties for misclassifications as the concentration parameters are all aimed to be minimized instead of allowing one to be much larger than others.

\subsection{Implementation Details}
The total loss to be minimized is:
\begin{equation} \label{eq:reg_loss}
	\mathcal{L}(\btheta) = \frac{1}{N} \sum_{i=1}^N \mathcal{F}_i(\btheta) + \lambda \mathcal{R}_i(\btheta)
\end{equation}
where $\lambda$ is a nonnegative parameter controlling the tradeoff between minimizing the approximate Bayes risk and the information regularization penalty.

Our method modifies the output layer of neural networks and the training loss, therefore maintaining computational efficiency and ease of implementation. Once the network architecture is set and the regularized loss (\ref{eq:reg_loss}) is defined, training is performed using a gradient-based optimizer using minibatches with $\lambda$ increasing using an annealing schedule, e.g., $\lambda_t = \lambda \min\{\frac{t-T_0}{T}, 1\}$ for $t>T_0$ for rate parameter (e.g. $T=60$) and $\lambda_t=0$ for $t\leq T_0$. The gradual annealing allows the network to learn discriminative features for classification first before introducing the information penalty. Training is stopped once the test/validation loss does not improve after $20$ epochs.

The computational complexity of computing a gradient step is $O(N_w)$ for each epoch and training example, where $N_w$ are the number of network parameters as it's based on the runtime complexity of backpropagation. The total train time complexity assuming a batch size of $B$ and $N_e$ epochs is $O(N_e B N_w)$.

\section{Experimental Results} \label{sec:experimental_results}
All experiments are implemented in Tensorflow \cite{Abadi:2016:TSL:3026877.3026899} and the Adam \cite{Kingma:2015} optimizer was used for training. As recent prior works \cite{Sensoy:NIPS:2018, Malinin:2019} have shown Dirichlet NNs outperforming BNNs on several benchmark image datasets, we mainly focus on comparing our method with these Dirichlet NNs trained with different loss functions. Comparisons are made with the following methods: (a) L2 corresponds to deterministic neural network with softmax output and weight decay, (b) Dropout is the uncertainty estimation method of \cite{Gal:ICML:2016}, (c) EDL is the evidential approach of \cite{Sensoy:NIPS:2018}, (d) RKLPN is the reverse KL divergence-based prior network method of \cite{Malinin:2019} with no OOD regularization, and (e) IAD is our proposed technique.

\subsection{Fashion-MNIST Dataset}
The LeNet CNN architecture with $20$ and $50$ filters of size $5 \times 5$ is used for the Fashion-MNIST dataset \cite{FashionMNIST} with $500$ hidden units at the dense layer. The train/test set contains $60,000$/$10,000$ examples. The results were generated with $\lambda=0.5, p=4$. Table \ref{acc-table-fashmnist} shows the test accuracy on Fashion-MNIST for these methods; IAD is shown to be competitive assigning low uncertainty to correct predictions and high uncertainty to errors. In general, a small accuracy loss is expected as the NN is trained so that data examples near the decision boundary (likely errors) lie in a high-uncertainty region that might affect predictions of nearby data; this can be mitigated by adjusting $\lambda$ or $p$. However, our results show that accuracy loss is not significant and OOD/adversarial uncertainty quantification improves upon prior methods while maintaining low uncertainty on correct predictions.
\begin{table}[t]
\caption{ \small Fashion-MNIST Dataset: Test accuracy ($\%$), median predictive entropy for correct and misclassified examples for various deep learning methods.}
\label{acc-table-fashmnist}
\vskip 0.15in
\begin{center}
\begin{small}
\begin{sc}
\begin{tabular}{ p{40pt} p{40pt} p{50pt} p{50pt} }
\toprule
Method & Accuracy & Median Entropy-Successes & Median Entropy-Errors \\
\midrule
L2      & 91.4 & 0.01 & 0.67 \\
Dropout & 91.4 & 0.17 & 0.93 \\
EDL     & 91.6 & 0.58 & 1.50 \\
RKLPN   & 92.5 & 0.48 & 1.19 \\
IAD     & 90.6 & 0.20 & 2.30 \\
\bottomrule
\end{tabular}
\end{sc}
\end{small}
\end{center}
\vskip -0.1in
\end{table}

To measure within-distribution uncertainty, Fig. \ref{fig:fashmnist-id-boxplot} shows boxplots of predictive distribution entropy for correct and misclassified examples across competing methods. The overconfidence of conventional L2 NNs is evident since the distribution mass of correct and wrong predictions is concentrated on lower uncertainties. The Dirichlet-based methods, EDL and RKLPN, tend to sacrifice correct class confidence for providing higher uncertainties on misclassified examples. IAD offers a drastic improvement over all methods with $63\%$ of the misclassified samples falling within $95\%$ of the max-entropy ($\log 10\approx 2.3$), as opposed to $3\%$ and $4\%$ of the misclassified samples of the RKLPN and EDL methods respectively.
\begin{figure}[ht]
	\centering
		\includegraphics[width=0.80\textwidth]{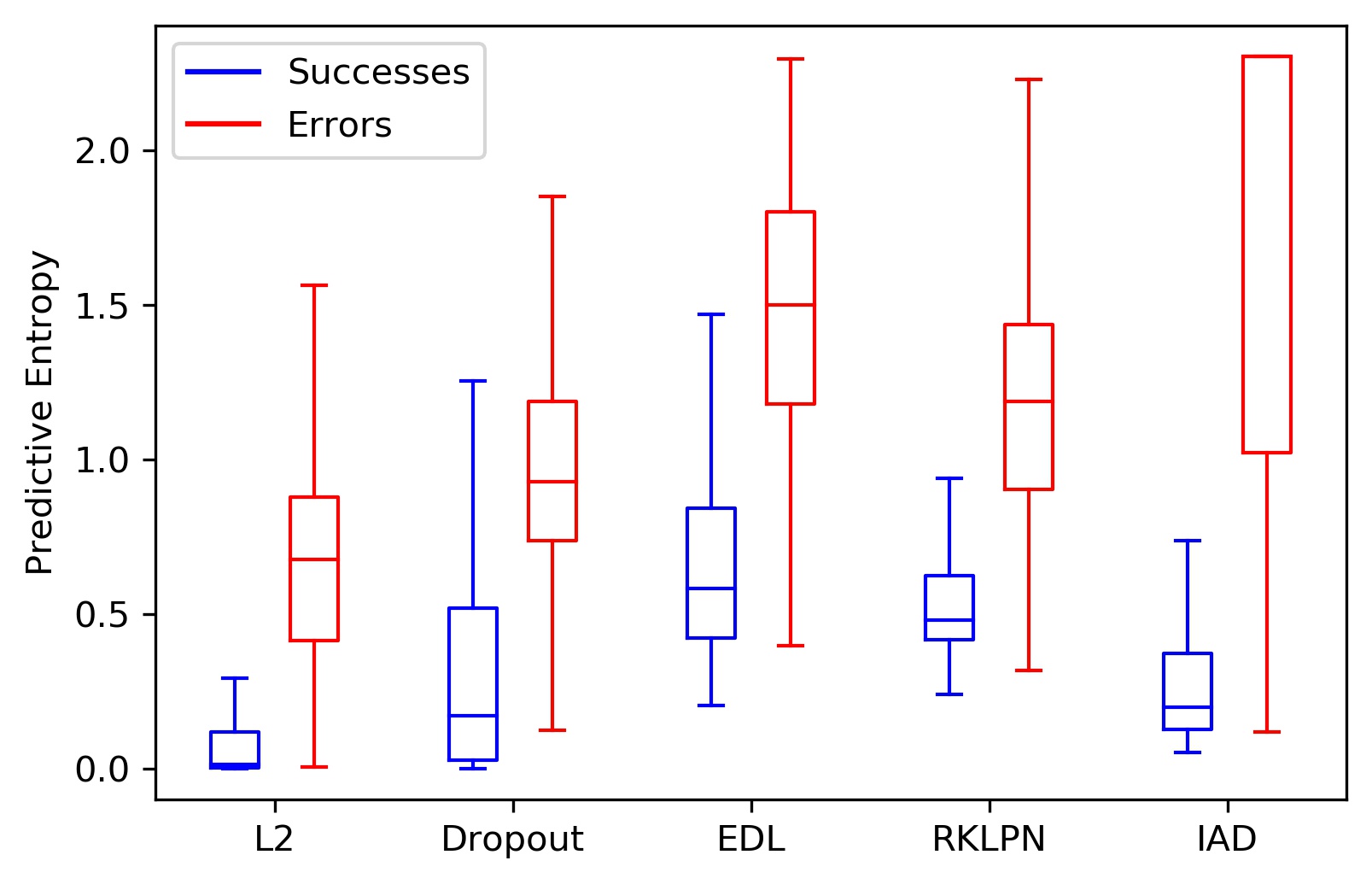}
	\caption{\small Boxplots of predictive distribution entropy for successes and errors on Fashion-MNIST dataset.}
	\label{fig:fashmnist-id-boxplot}
\end{figure}

To evaluate out-of-distribution uncertainty quantification, the trained model on Fashion-MNIST is tested with image data from different datasets. Specifically, IAD is tested on notMNIST \cite{notMNIST} which contains only English letters, and OmniGlot \cite{OmniGlot} which contains characters from multiple alphabets, serving as out-of-distribution data. The uncertainty is expected to be high for all such images as they do not fit into any trained category. Figure \ref{fig:fashmnist-ood-boxplot} shows boxplots of the predictive entropy and mutual information; and it's more desirable to have these metrics higher. IAD is much more tightly concentrated towards higher entropy values; for notMNIST/OmniGlot, an impressive $60\%$/$72\%$ of images have entropy larger than $95\%$ of the max-entropy, while EDL and PN have $5\%$/$10\%$ and $9\%$/$14\%$ approximately.
\begin{figure}[ht]
	\centering
		\includegraphics[width=0.95\textwidth]{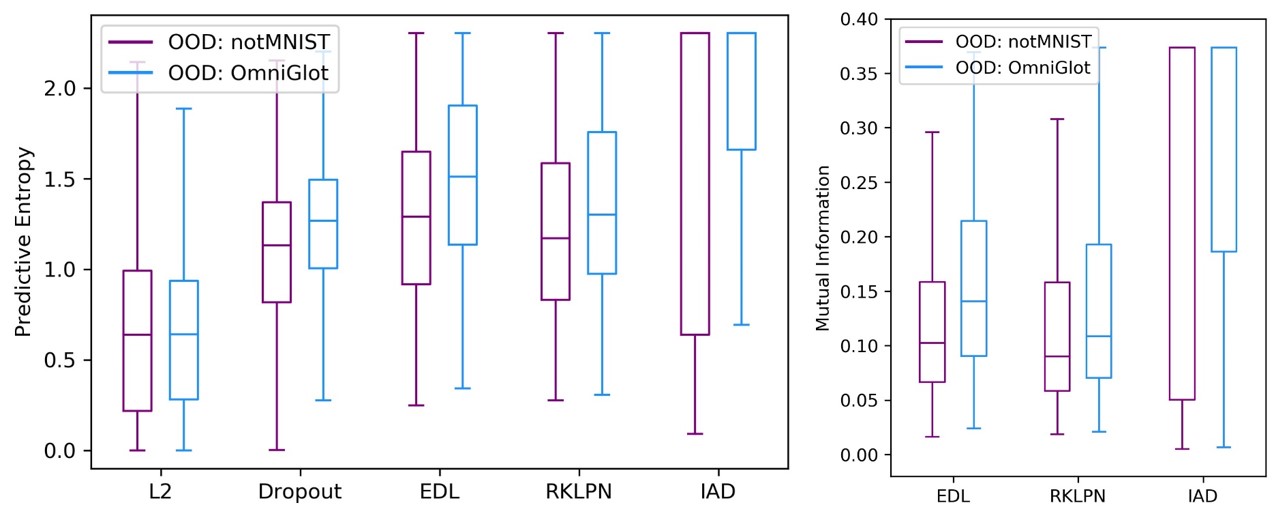}
	\caption{\small Boxplots of predictive distribution entropy (left) and mutual information (right) on out-of-distribution data (notMNIST, OmniGlot) when network is trained on Fashion-MNIST dataset.}
	\label{fig:fashmnist-ood-boxplot}
\end{figure}

Adversarial uncertainty quantification on Fashion-MNIST was also evaluated. Figure \ref{fig:fashmnist-adv-fgsm} shows the adversarial performance when each model is evaluated using adversarial examples generated with the untargeted Fast Gradient Sign method (FGSM) \cite{Goodfellow:2014} for different noise values $\epsilon$, i.e., $\bx_{adv} = \bx + \epsilon \text{sgn}(\nabla_{\bx} \mathcal{F}(\bx,y,\btheta))$. We observe that IAD achieves higher predictive uncertainty on adversarial examples as $\epsilon$ increases than other methods while achieving lower uncertainty for $\epsilon=0$ due to the higher confidence of correct predictions. The quantile spread is shown in Fig. \ref{fig:fashmnist-fgsm-boxplot} for a given noise value. Interestingly, a large entropy is also assigned to misclassified samples as Fig. \ref{fig:fashmnist-id-boxplot} shows.
\begin{figure*}[tp]
	\centering
 \includegraphics[width=0.99\textwidth]{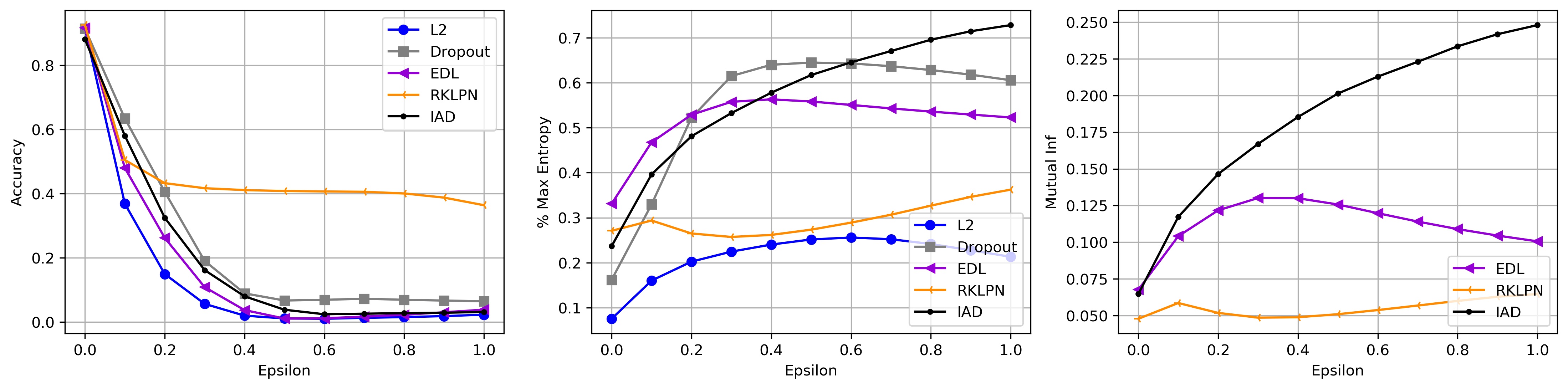}
	\caption{\small Test accuracy (left), predictive entropy (middle) and mutual information (right) for FGSM adversarial examples as a function of adversarial noise $\epsilon$ on Fashion-MNIST dataset. The metrics are averaged over the test set here. }
	\label{fig:fashmnist-adv-fgsm}
\end{figure*}
\begin{figure}[ht]
	\centering
		\includegraphics[width=0.95\textwidth]{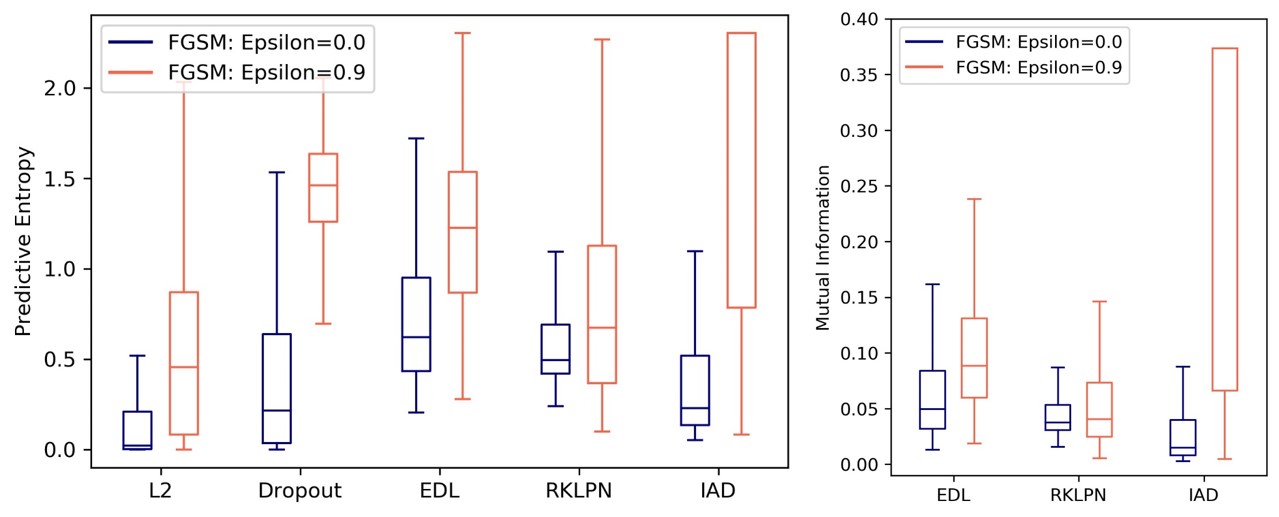}
	\caption{\small Boxplots of predictive distribution entropy (left) and mutual information (right) for clean examples ($\epsilon=0$) and untargeted FGSM perturbations ($\epsilon=0.9$) when network is trained on Fashion-MNIST dataset.}
	\label{fig:fashmnist-fgsm-boxplot}
\end{figure}

The test accuracy as a function of epochs is shown in Fig. \ref{fig:fashmnist-learningcurve} during the training process. While IAD training takes a longer time to converge, it achieves significantly higher predictive uncertainty for OOD and adversarial examples.
\begin{figure}[ht]
	\centering
		\includegraphics[width=0.75\textwidth]{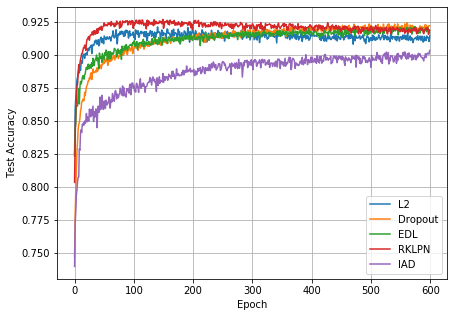}
	\caption{\small Test accuracy as a function of epochs for learning rate $1e-3$ for networks trained on Fashion-MNIST dataset.}
	\label{fig:fashmnist-learningcurve}
\end{figure}

\subsection{CIFAR-10 Dataset}
A VGG-based CNN architecture, consisting of three filter blocks with $64,128,256$ filters, respectively with filter sizes $3 \times 3$, was used for the CIFAR-10 dataset \cite{CIFAR} with $256$ hidden units at the dense layer. The train/test set is made up of $60,000$/$10,000$ examples. Regularization parameter $\lambda=0.3$ was adopted with $p=4$. Data augmentation, dropout and batch-normalization was used for all methods to mitigate overfitting. Table \ref{acc-table-cifar-10} shows the test accuracy on CIFAR-10 for these methods; IAD is shown to be competitive assigning low uncertainty to correct predictions and high uncertainty to errors.
\begin{table}[t]
\caption{\small CIFAR-10 Dataset: Test accuracy ($\%$), median predictive entropy for correct and misclassified examples for various deep learning methods.}
\label{acc-table-cifar-10}
\vskip 0.15in
\begin{center}
\begin{small}
\begin{sc}
\begin{tabular}{ p{40pt} p{40pt} p{50pt} p{50pt} }
\toprule
Method & Accuracy & Median Entropy-Successes & Median Entropy-Errors \\
\midrule
L2      & 85.2 & 0.02 & 0.84 \\
Dropout & 86.7 & 0.14 & 1.10 \\
EDL     & 87.8 & 0.55 & 1.26 \\
RKLPN   & 85.1 & 0.41 & 1.16 \\
IAD     & 85.6 & 0.29 & 1.51 \\
\bottomrule
\end{tabular}
\end{sc}
\end{small}
\end{center}
\vskip -0.1in
\end{table}

Within-distribution uncertainty quantification is evaluated in Fig. \ref{fig:cifar10-id-boxplot} which shows boxplots of predictive distribution entropy for correct and misclassified examples. Similar to the previous set of results, conventional L2 NNs yield overconfident predictions and EDL and RKLPN sacrifice correct class confidence for providing higher uncertainties on misclassified examples. IAD offers an improvement over all methods as the tail of the distribution of predictive entropies associated with misclassified examples is more heavily concentrated on higher values, while maintaining an improved correct prediction confidence over other Dirichlet neural networks.
\begin{figure}[ht]
	\centering
		\includegraphics[width=0.80\textwidth]{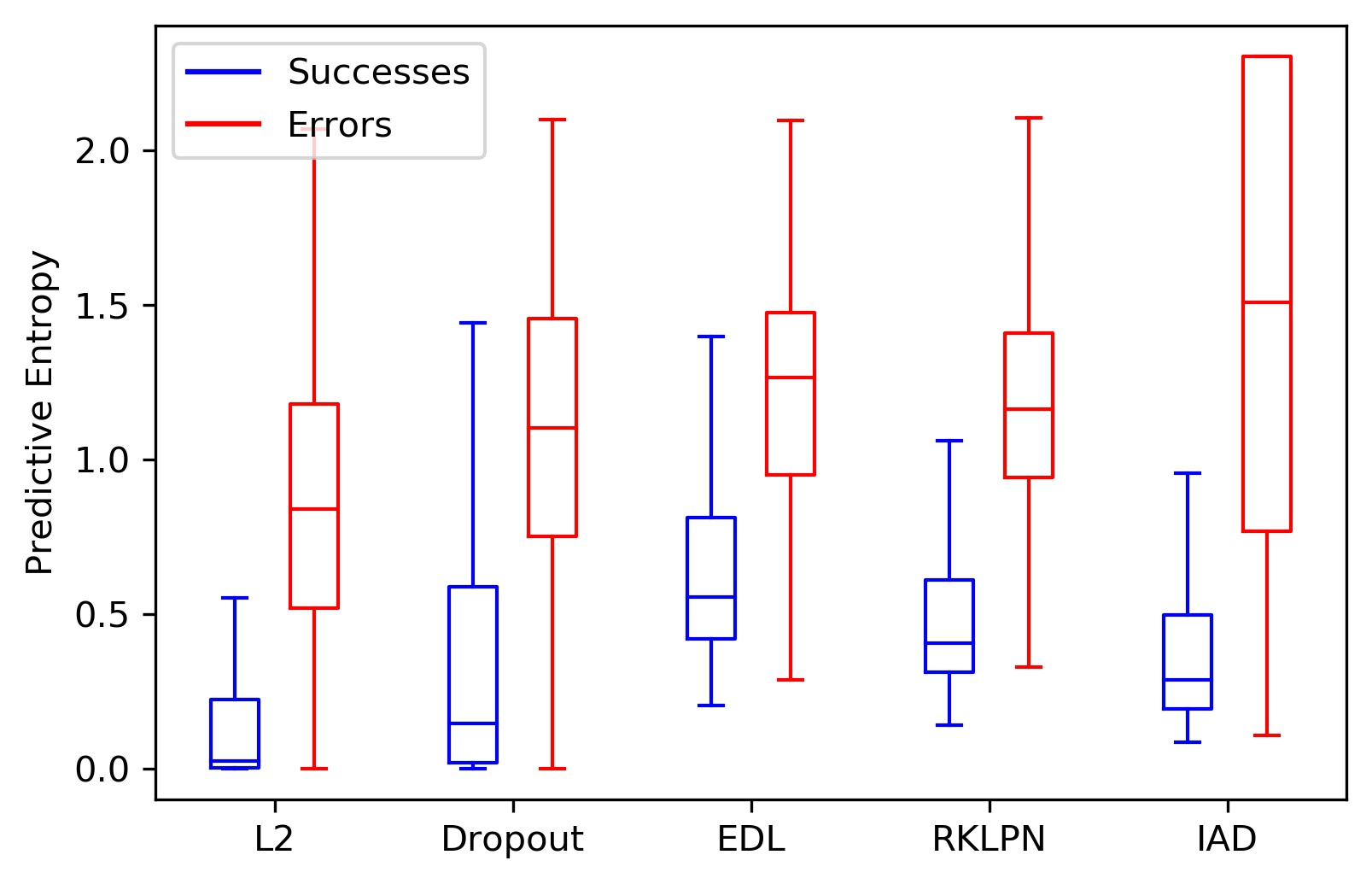}
	\caption{\small Boxplots of predictive distribution entropy for successes and errors on CIFAR-10 dataset.}
	\label{fig:cifar10-id-boxplot}
\end{figure}

For out-of-distribution testing, IAD is tested on SUN \cite{SUN} which contains various environmental scene and places images, and SVHN \cite{SVHN} which contains street-view house numbers. High uncertainty is expected for all such images as they do not fit into any trained category. Figure \ref{fig:cifar10-ood-boxplot} shows the spread of predictive entropy and mutual information using boxplots. IAD improves upon competing methods as it concentrates more towards higher uncertainty metrics.
\begin{figure}[ht]
	\centering
		\includegraphics[width=0.95\textwidth]{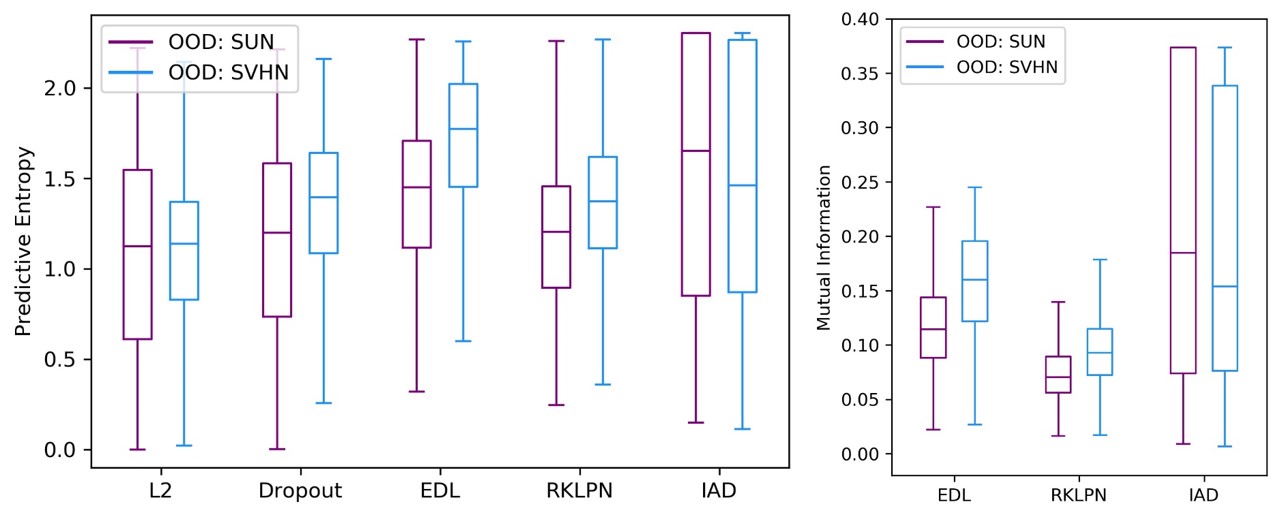}
	\caption{\small Boxplots of predictive distribution entropy (left) and mutual information (right) on out-of-distribution data (SUN, SVHN) when network is trained on CIFAR-10 dataset.}
	\label{fig:cifar10-ood-boxplot}
\end{figure}

The adversarial performance for CIFAR-10 is shown in Fig. \ref{fig:cifar10-adv-fgsm} under FGSM adversarial attacks as a function of noise $\epsilon$. It is observed that IAD starts at low predictive entropy/mutual information and quickly increases its uncertainty as more adversarial noise is added. The spread of the predictive entropy and mutual information distributions are shown in Fig. \ref{fig:cifar10-fgsm-boxplot}. We note IAD offers a significant improvement over other methods in the mutual information metric specifically.
\begin{figure*}[tp]
	\centering
 \includegraphics[width=0.99\textwidth]{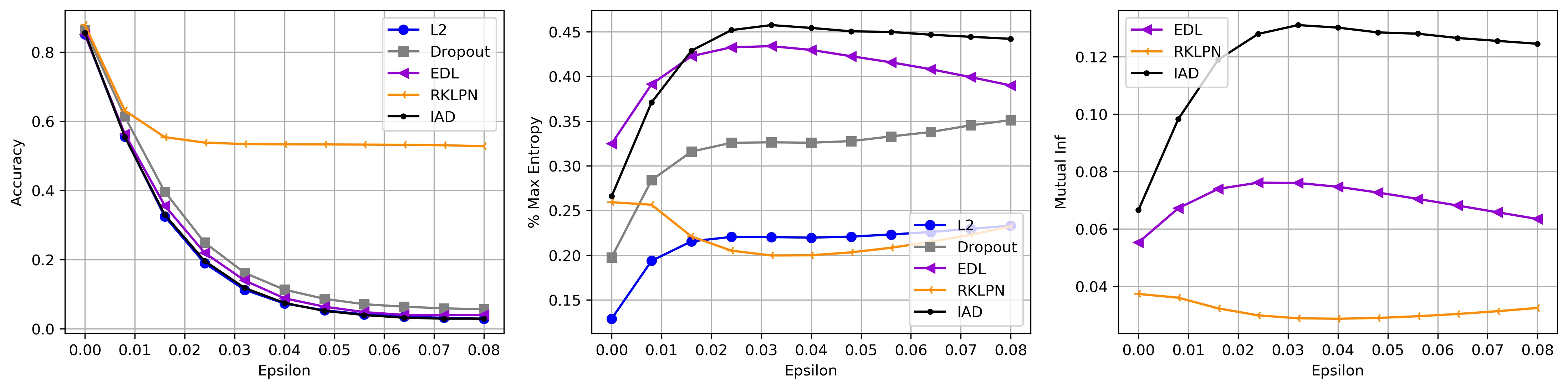}
	\caption{\small Test accuracy (left), predictive entropy (middle) and mutual information (right) for FGSM adversarial examples as a function of adversarial noise $\epsilon$ on CIFAR-10 dataset.}
 \label{fig:cifar10-adv-fgsm}
\end{figure*}
\begin{figure}[ht]
	\centering
		\includegraphics[width=0.95\textwidth]{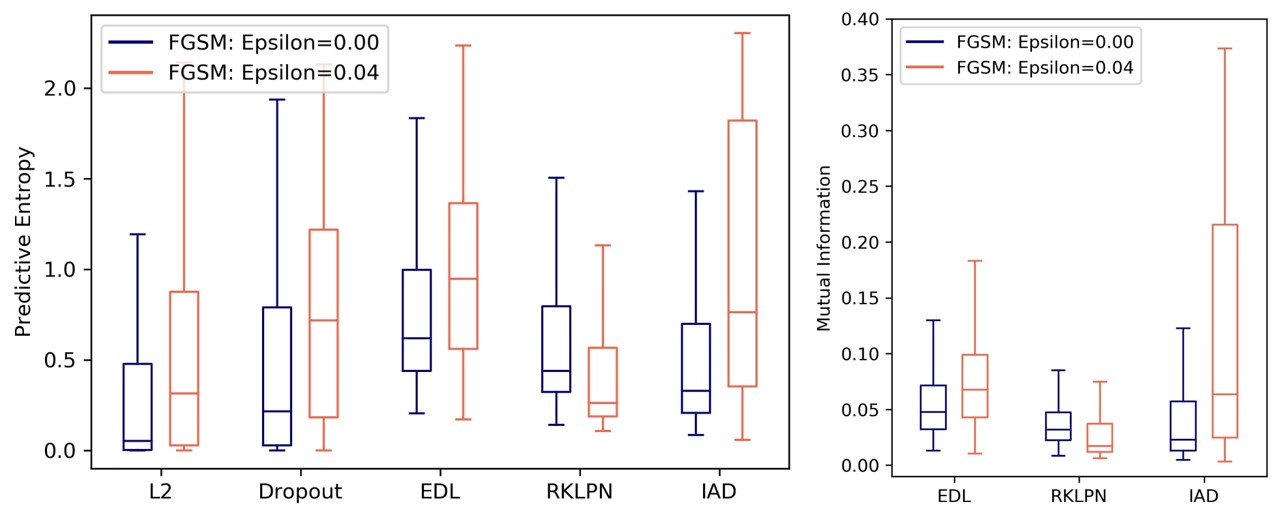}
	\caption{\small Boxplots of predictive distribution entropy (left) and mutual information (right) for clean examples ($\epsilon=0$) and untargeted FGSM perturbations ($\epsilon=0.04$) when network is trained on CIFAR-10 dataset.}
	\label{fig:cifar10-fgsm-boxplot}
\end{figure}

\section{Related Work}
The authors in \cite{Sensoy:NIPS:2018} propose a mean-square error loss, $\EE_{\bp_i\sim f(\cdot;\balpha_i)}[\nn \by_i-\bp_i \nn_2^2] = \sum_k (y_{ik}-\EE[p_{ik}])^2 + \Var(p_{ik})$, and provide a limited analysis by showing the squared-bias decreasing in the true class concentration parameter, $\alpha_{i,c_i}$, and it decreases when the largest incorrect concentration parameter, $\alpha_{ij},j\neq c_i$, decays. However, these properties were not shown for the aggregate loss function or the regularizer, and the behavior of the loss is not studied in terms of all concentration parameters.

The work \cite{Malinin:2019} proposed the KL loss given by $D_{KL}( f(\cdot;\alpha_i) \parallel f(\cdot;(\beta+1)\by_i + (1-\by_i)))$ for some arbitrary target parameter $\beta\gg 1$. The authors do not provide an analysis that related the Dirichlet concentration parameters with their loss, and further require OOD data for learning what is anomalous using an auxiliary loss biasing the predictive uncertainty of the trained model which is a questionable assumption for most applications.

In contrast to these works, we provide theoretical properties of our proposed aggregate loss function. We analytically show the proposed loss is decreasing as the true class concentration parameter grows, and increasing as incorrect concentration parameters grow, which yields insights into how Dirichlet distributions are shaped on the simplex, while not relying on access to OOD data during training.

Our proposed loss function aims to approximate the maximum prediction error $\EE_{\bp_i\sim f(\cdot;\balpha_i)}[\max_k |y_{ik}-p_{ik}|]$ using an $L_p$ norm relaxation (\ref{eq:Lp_loss}). Our max-norm objective minimizes the cost of the highest prediction error among the classes, while the $L_2$ norm minimizes the aggregate sum-of-squares, which is more prone to being affected by outlier scores, e.g. $\nn \be_i \nn_\infty \leq \nn \be_i \nn_p \leq \nn \be_i \nn_2$ for $p>2$. Experiments in Section \ref{sec:experimental_results} show improvements in predictive uncertainty estimation for failure cases associated with within-distribution queries, and anomalous queries including out-of-distribution and adversarial examples.


\section{Conclusion}
In this work, we presented a new method for training Dirichlet neural networks that are aware of the uncertainty associated with predictions.  Our training objective fits predictive distributions to data using a classification loss that minimizes an approximation to the maximum expected prediction error measured in, and an information regularization loss that penalizes information flow towards incorrect classes. We derived closed-form expressions for our training loss and desirable properties on how improved uncertainty estimation is achieved. Experimental results were shown on image classification tasks, highlighting improvements in predictive uncertainty estimation for within-distribution, out-of-distribution and adversarial queries in comparison to conventional neural networks with weight decay, Bayesian neural networks, and other recent Dirichlet networks trained with different loss functions.

Future work directions include evaluating predictive uncertainty of our proposed network to detect stronger adversarial attacks (e.g. iterative FGSM), optimizing the network for adversarial robustness with the information-aware penalty (e.g., using adversarial training), and considering extensions to more flexible prior distributions on the simplex (e.g. Dirichlet mixtures).

\section*{Appendix}

We make use of the following lemmas in the proofs.
\begin{lemma} \label{lem:psi}
	Consider the digamma function $\psi$. Assuming $x_1>x_2>1$ and $p>0$, the following inequality strictly holds:
	\begin{equation*}
		0< \psi(x_1+p)-\psi(x_2+p) < \psi(x_1)-\psi(x_2)
	\end{equation*}
	Furthermore, we have $\lim_{x\to\infty} {\psi(x+p)-\psi(x)} = 0$.
\end{lemma}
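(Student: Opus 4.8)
The plan is to reduce everything to two elementary facts about the trigamma function $\psi^{(1)}=\psi'$: it is strictly positive and strictly decreasing on $(0,\infty)$, and it tends to $0$ at infinity. Both follow from the series $\psi^{(1)}(x)=\sum_{n=0}^{\infty}(x+n)^{-2}$, which converges for $x>0$, is termwise strictly decreasing in $x$, and is bounded above by $x^{-2}+\int_0^{\infty}(x+t)^{-2}\,dt = x^{-2}+x^{-1}$, hence vanishes as $x\to\infty$. Positivity of $\psi^{(1)}$ already gives the left inequality: $x_1+p>x_2+p>0$ together with $\psi$ strictly increasing implies $\psi(x_1+p)-\psi(x_2+p)>0$.

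For the right inequality I would write each difference as an integral of the trigamma function,
\[
\psi(x_1+p)-\psi(x_2+p)=\int_{x_2}^{x_1}\psi^{(1)}(t+p)\,dt,\qquad \psi(x_1)-\psi(x_2)=\int_{x_2}^{x_1}\psi^{(1)}(t)\,dt,
\]
and observe that for every $t\in[x_2,x_1]$ and every $p>0$ we have $t+p>t$, so strict monotonicity of $\psi^{(1)}$ gives the pointwise bound $\psi^{(1)}(t+p)<\psi^{(1)}(t)$; since the interval $[x_2,x_1]$ has positive length ($x_1>x_2$), integrating the strict inequality yields the claimed strict inequality. An equivalent route is the series identity $\psi(b)-\psi(a)=\sum_{n\ge0}\big(\tfrac1{n+a}-\tfrac1{n+b}\big)$: comparing the summands for $(x_1,x_2)$ with those for $(x_1+p,x_2+p)$, each pairwise difference is strictly positive because $y\mapsto \tfrac1{n+x_2+y}-\tfrac1{n+x_1+y}$ has strictly negative derivative.

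For the limit, the same integral representation gives $0\le \psi(x+p)-\psi(x)=\int_x^{x+p}\psi^{(1)}(t)\,dt\le p\,\psi^{(1)}(x)$, using that $\psi^{(1)}$ is decreasing, and the right-hand side tends to $0$ by the bound above; alternatively one may invoke the asymptotic $\psi(x)=\ln x+O(1/x)$, so that $\psi(x+p)-\psi(x)=\ln(1+p/x)+O(1/x)\to 0$.

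I do not anticipate a genuine obstacle here. The one point that requires a little care is that $p$ need not be an integer, so one cannot simply telescope the recurrence $\psi(x+1)-\psi(x)=1/x$; the integral (or series) representation is precisely what makes the shift-by-$p$ comparison clean, and that is the device I would lean on throughout, together with the monotonicity and decay of $\psi^{(1)}$.
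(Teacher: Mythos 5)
Your proof is correct, and it takes a different route from the paper's. The paper works directly with Gauss's integral representation $\psi(z+1)=-\gamma+\int_0^1\frac{1-t^z}{1-t}\,dt$: it writes $x_i=s_i+1$, expresses $\psi(x_1)-\psi(x_2)$ as $\int_0^1\frac{t^{s_2}-t^{s_1}}{1-t}\,dt$, and obtains the strict inequality by observing that the difference of the two sides equals $\int_0^1\frac{(1-t^p)(t^{s_2}-t^{s_1})}{1-t}\,dt>0$; the limit is then read off the asymptotic expansion $\psi(x)=\log x-\frac{1}{2x}+O(1/x^2)$. You instead reduce everything to positivity, strict monotone decrease, and decay of the trigamma function, writing each difference as $\int_{x_2}^{x_1}\psi^{(1)}(t+p)\,dt$ versus $\int_{x_2}^{x_1}\psi^{(1)}(t)\,dt$ and comparing integrands --- in effect exploiting that $\psi$ is strictly increasing and strictly concave on $(0,\infty)$. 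Both arguments are sound and handle non-integer $p$ cleanly. Your version is slightly more economical in hypotheses (it never uses $x_2>1$, only $x_2>0$, whereas the paper's substitution $x=s+1$ is tailored to that assumption) and it yields a quantitative bound $0\le\psi(x+p)-\psi(x)\le p\,\psi^{(1)}(x)\le p\bigl(x^{-1}+x^{-2}\bigr)$ for the limit rather than invoking an asymptotic expansion. What the paper's choice buys is uniformity: the identical integral-representation manipulation is reused verbatim in its Lemma~2 for the trigamma comparison, so the two lemmas come out of one template (your method would also extend there, but via $\psi^{(2)}$, i.e., one more derivative).
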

\begin{proof}
	Since $x_1>x_2>1$, we can write $x_1=s_1+1$ and $x_2=s_2+1$ for some $s_1>s_2$. Upon substitution of the Gauss integral representation $\psi(z+1) = -\gamma + \int_0^1 \left( \frac{1-t^z}{1-t} \right) dt$ (here $\gamma$ is the Euler-Mascheroni constant), we have:
\begin{equation*}
	\psi(x_1)-\psi(x_2) = \int_0^1 \left( \frac{t^{s_2}-t^{s_1}}{1-t} \right) dt
\end{equation*}
which is strictly positive since the integrand is positive for $t\in (0,1)$. Using the integral representation again, the inequality $\psi(x_1+p)-\psi(x_2+p) < \psi(x_1)-\psi(x_2)$ is equivalent to:
\begin{equation*}
	\int_0^1 \left( \frac{(1-t^p)(t^{s_2}-t^{s_1})}{1-t} \right) > 0
\end{equation*}
which holds since the integrand is positive due to $t^p<1$ an $t^{s_1}<t^{s_2}$. The limit of $\psi(x+p)-\psi(x)$ follows from the asymptotic expansion $\psi(x)=\log(x)-\frac{1}{2x}+O\left(\frac{1}{x^2}\right)$, which yields $\psi(x+p)-\psi(x)\sim \log(1+p/x) - \frac{1}{2(x+p)}+\frac{1}{2x} \to 0$ as $x\to\infty$. This concludes the proof.
\end{proof}

\begin{lemma} \label{lem:psi1}
	Consider the polygamma function of order 1 $\psi^{(1)}(z)=\frac{d}{dz}\psi(z)$. Assuming $x_1>x_2>1$ and $p>0$, the following inequality strictly holds:
	\begin{equation*}
		\psi^{(1)}(x_1)-\psi^{(1)}(x_2) < \psi^{(1)}(x_1+p)-\psi^{(1)}(x_2+p) < 0
	\end{equation*}
\end{lemma}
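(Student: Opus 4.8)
The plan is to adapt the integral-representation technique already used for Lemma~\ref{lem:psi}. Differentiating the Gauss integral $\psi(z+1) = -\gamma + \int_0^1 \frac{1-t^z}{1-t}\,dt$ with respect to $z$ (legitimate, as noted below) yields
\[
  \psi^{(1)}(z+1) = \int_0^1 \frac{t^z \log(1/t)}{1-t}\,dt ,
\]
whose kernel $\log(1/t)/(1-t)$ is strictly positive on $(0,1)$. Writing $x_1 = s_1+1$ and $x_2 = s_2+1$ with $s_1 > s_2 > 0$ --- available since $x_1 > x_2 > 1$ --- every quantity in the claimed chain becomes an integral of this positive kernel against a polynomial factor, so the whole argument reduces to tracking the sign of that factor on $(0,1)$.

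For the upper bound I would write
\[
  \psi^{(1)}(x_1+p) - \psi^{(1)}(x_2+p) = \int_0^1 \frac{\bigl(t^{s_1+p} - t^{s_2+p}\bigr)\log(1/t)}{1-t}\,dt ,
\]
and note that $t^{s_1+p} < t^{s_2+p}$ for $t \in (0,1)$ (larger exponent, smaller power), so the integrand is strictly negative and the difference is negative; the same display with the shift removed shows in passing that $\psi^{(1)}$ is strictly decreasing.

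For the lower bound I would subtract the two differences:
\[
  \bigl[\psi^{(1)}(x_1+p) - \psi^{(1)}(x_2+p)\bigr] - \bigl[\psi^{(1)}(x_1) - \psi^{(1)}(x_2)\bigr] = \int_0^1 \frac{\bigl(t^{s_1+p} - t^{s_2+p} - t^{s_1} + t^{s_2}\bigr)\log(1/t)}{1-t}\,dt .
\]
The bracketed polynomial factors as $(t^p - 1)(t^{s_1} - t^{s_2})$, and on $(0,1)$ both factors are strictly negative ($t^p < 1$ and $t^{s_1} < t^{s_2}$), so their product is strictly positive; multiplying by the positive kernel and integrating gives a strictly positive value. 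This is precisely $\psi^{(1)}(x_1) - \psi^{(1)}(x_2) < \psi^{(1)}(x_1+p) - \psi^{(1)}(x_2+p)$, and chaining with the upper bound closes the proof.

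The only delicate point, and it is routine, is justifying the differentiation under the integral sign together with integrability: near $t = 1$ the kernel $\log(1/t)/(1-t)$ extends continuously to the value $1$, and near $t = 0$ the factor $t^{s}\log(1/t)$ tends to $0$ for every $s > 0$, so all integrands are bounded and the $z$-derivative is dominated locally uniformly. One can also bypass the integral entirely using the series $\psi^{(1)}(z) = \sum_{n \ge 0} (z+n)^{-2}$: termwise strict decrease gives the upper bound, and for the lower bound the map $x \mapsto (x+n)^{-2} - (x+p+n)^{-2}$ has derivative $-2(x+n)^{-3} + 2(x+p+n)^{-3} < 0$, so $\psi^{(1)}(x) - \psi^{(1)}(x+p)$ is strictly decreasing in $x$, which rearranges to the desired left inequality.
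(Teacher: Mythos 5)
Your proposal is correct and follows essentially the same route as the paper's proof: the substitution $x_i=s_i+1$, the integral representation $\psi^{(1)}(z+1)=\int_0^1 \frac{t^z}{1-t}\ln(1/t)\,dt$, and the factorization $(t^p-1)(t^{s_1}-t^{s_2})$ of the combined integrand are exactly the paper's steps, with only cosmetic differences (you check negativity at the shifted arguments and add the routine justification of differentiating under the integral, plus a valid series-based alternative).
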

\begin{proof}
	Proceeding similarly as in the Proof of Lemma \ref{lem:psi}, we write $x_1=s_1+1$ and $x_2=s_2+1$ for some $s_1>s_2$. Upon substitution of the integral representation $\psi^{(1)}(z+1) = \int_0^1 \left( \frac{t^z}{1-t} \ln\left( \frac{1}{t} \right) \right) dt$, we have:
\begin{equation*}
	\psi^{(1)}(x_1)-\psi^{(1)}(x_2) = \int_0^1 \left( \frac{t^{s_1}-t^{s_2}}{1-t} \ln\left( \frac{1}{t} \right) \right) dt
\end{equation*}
which is strictly negative since the integrand is negative for $t\in (0,1)$. Using the integral representation again, the inequality $\psi^{(1)}(x_1)-\psi^{(1)}(x_2) < \psi^{(1)}(x_1+p)-\psi^{(1)}(x_2+p)$ is equivalent to:
\begin{equation*}
	\int_0^1 \left( \frac{(1-t^p)(t^{s_1}-t^{s_2})}{1-t} \ln\left( \frac{1}{t} \right) \right) < 0
\end{equation*}
which holds true since $\ln(1/t)>0$ for $t\in (0,1)$. This concludes the proof.
\end{proof}

\subsection*{Proof of Theorem \ref{thm:loss_decay_correctclass}}
\begin{proof}
Taking the logarithm of $\mathcal{F}_i$, we have:
\begin{equation*}
	\log \mathcal{F}_i = \frac{1}{p} \log \left( \frac{\Gamma(\alpha_0)}{\Gamma(\alpha_0+p)} \right) + \frac{1}{p} \log \left( \frac{\Gamma(\sum_{k\neq c} \alpha_k + p)}{\Gamma(\sum_{k\neq c} \alpha_k)} + \sum_{j\neq c} \frac{\Gamma(\alpha_j+p)}{\Gamma(\alpha_j)} \right)
\end{equation*}
where the second term is independent of $\alpha_c$. Letting the first term be denoted as $g(\alpha_c) := \frac{1}{p} \log \left( \frac{\Gamma(\alpha_0)}{\Gamma(\alpha_0+p)} \right)$, it suffices to show $f(\alpha_c):=\exp(g(\alpha_c))$ is strictly convex and decreasing in $\alpha_c$. Differentiating $g(\alpha_c)$ twice we obtain:
\begin{align*}
	g'(\alpha_c)  &= \frac{1}{p} \left( \psi(\alpha_0) - \psi(\alpha_0+p) \right) \\
	g''(\alpha_c) &= \frac{1}{p} \left( \psi^{(1)}(\alpha_0) - \psi^{(1)}(\alpha_0+p) \right)
\end{align*}
Lemmas \ref{lem:psi} and \ref{lem:psi1} then yield that $g'(\alpha_c)<0$ and $g''(\alpha_c)>0$ respectively. Differentiating $f(\alpha_c)$ twice, we have:
\begin{align*}
	f'(\alpha_c)  &= e^{g(\alpha_c)} g'(\alpha_c) \\
	f''(\alpha_c) &= e^{g(\alpha_c)} \left( g''(\alpha_c) + (g'(\alpha_c))^2 \right)
\end{align*}
Using the inequalities above and the positivity of $e^{g(\alpha_c)}$, it follows that $f'(\alpha_c)<0$ and $f''(\alpha_c)>0$. Thus, $f(\alpha_c)$ is a strictly convex decreasing function in $\alpha_c$. This concludes the proof.
\end{proof}

\subsection*{Proof of Theorem \ref{thm:loss_attn}}
\begin{proof}
Consider a concentration parameter $\alpha_j$ corresponding to an incorrect class, i.e., $j\neq c$. Define the ratio of Gamma functions as:
\begin{equation*} 
	\mu(\alpha) \defequal \frac{\Gamma(\alpha+p)}{\Gamma(\alpha)}
\end{equation*}
This function is positive, increasing and convex with derivative given by:
\begin{align}
	\mu'(\alpha) &= -\frac{\Gamma(\alpha+p)\Gamma'(\alpha)}{\Gamma(\alpha)^2}  + \frac{\Gamma'(\alpha+p)}{\Gamma(\alpha)} \nonumber \\ 
		&= - \frac{\Gamma(\alpha+p)\psi(\alpha)}{\Gamma(\alpha)} + \frac{\Gamma(\alpha+p)\psi(\alpha+p)}{\Gamma(\alpha)} \nonumber \\
		&= \mu(\alpha) \left( \psi(\alpha+p)-\psi(\alpha) \right) \nonumber \\
		&= \mu(\alpha) \nu(\alpha) \label{eq:dmu}
\end{align}
where we used the relation $\Gamma'(z)=\Gamma(z)\psi(z)$ and defined
\begin{equation*}
	\nu(\alpha) \defequal \psi(\alpha+p)-\psi(\alpha).
\end{equation*}
From Lemma \ref{lem:psi}, it follows that $\nu(\alpha)>0$ which implies $\mu(\alpha)$ is increasing.

Since $(\cdot)^{1/p}$ is a continuous increasing function, it suffices to show the objective $\mathcal{G}=\mathcal{F}_i^p$ is increasing, given by $\mathcal{G}(\alpha_j) = \left( \mu\left(\sum\limits_{l\neq c}\alpha_l\right) + \sum\limits_{l\neq c} \mu(\alpha_l) \right) / \mu(\alpha_0)$. The derivative is then calculated as:
\begin{equation*}
	\mathcal{G}'(\alpha_j) = \frac{\mu'\left(\sum\limits_{l\neq c}\alpha_l\right)+\mu'(\alpha_j)}{\mu(\alpha_0)} - \frac{\mu'(\alpha_0) \cdot  \left[ \mu\left(\sum\limits_{l\neq c}\alpha_l\right) + \sum\limits_{l\neq c} \mu(\alpha_l)  \right]}{\mu(\alpha_0)}
\end{equation*}
The condition $\mathcal{G}'(\alpha_j)>0$ is equivalent to:
\begin{equation*}
	\frac{\mu'\left(\sum\limits_{l\neq c}\alpha_l\right)+\mu'(\alpha_j)}{\mu'(\alpha_0)} > \frac{\mu\left(\sum\limits_{l\neq c}\alpha_l\right) + \sum\limits_{l\neq c} \mu(\alpha_l) }{\mu(\alpha_0)} = \mathcal{G}
\end{equation*}
Upon substituting the expression (\ref{eq:dmu}), this condition becomes:
\begin{equation}
	\mu\left(\sum\limits_{l\neq c}\alpha_l\right) \nu\left( \sum\limits_{l\neq c}\alpha_l \right) + \mu(\alpha_j) \nu(\alpha_j) > \left[\mu\left(\sum\limits_{l\neq c}\alpha_l\right) + \sum\limits_{l\neq c} \mu(\alpha_l) \right] \nu(\alpha_0) \label{eq:ineq}
\end{equation}
From Lemma \ref{lem:psi}, it follows that $\nu\left( \sum\limits_{l\neq c}\alpha_l \right) > \nu(\alpha_0)$ and $\nu(\alpha_j) > \nu(\alpha_0)$. In addition, the functions $\mu\left(\sum\limits_{l\neq c}\alpha_l\right) \nu\left( \sum\limits_{l\neq c}\alpha_l \right)$ and $\mu(\alpha_j) \nu(\alpha_j)$ are both increasing as $\alpha_j$ grows. Using these results and the fact that $\left[ \sum\limits_{l\neq c,j} \mu(\alpha_l) \right] \nu(\alpha_0) \to 0$ as $\alpha_j$ grows (due to Lemma \ref{lem:psi}), it follows that the inequality (\ref{eq:ineq}) holds true for large $\alpha_j$. Thus, we conclude that the loss function is increasing as $\alpha_j$ gets large. The proof is complete.
\end{proof}

An illustration of Theorem 2 is shown in Fig. \ref{fig:thm2_illustration} below. An approximate loss function is also shown due to $\lim_{\alpha\to\infty} \frac{\Gamma(\alpha+p)}{\Gamma(\alpha)\alpha^p} = 1$, from which we obtain the approximation $\mu(\alpha) \sim \alpha^p$. This approximation to the loss behaves similarly. Despite the initial dip, the loss is increasing as $\alpha_j$ increases. We remark that the loss is neither convex nor concave in $\alpha_j$.
\begin{figure}[ht]
	\centering
		\includegraphics[width=0.80\textwidth]{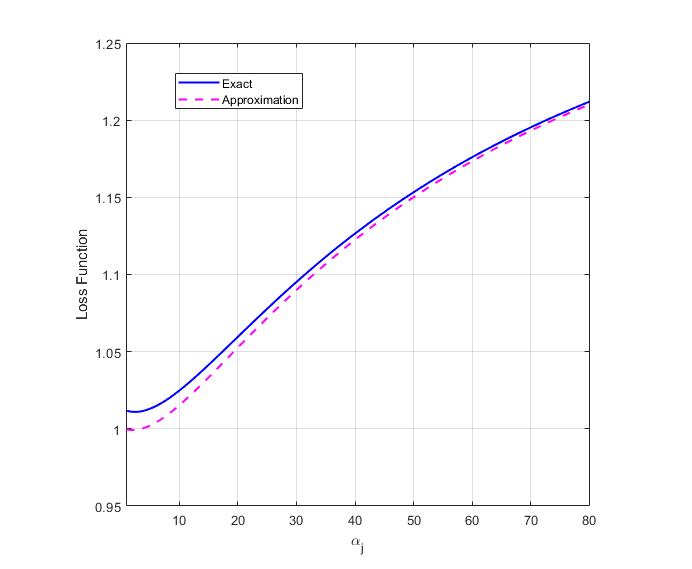}
	\caption{ Illustrative example for Theorem 2. Here, the loss function $\mathcal{F}_i$ is plotted as a function of $\alpha_j$, $j\neq c$. Parameters $p=2$ and a random $\balpha$ vector were used for $K=10$ classes with $\alpha_c$ small relative to other concentration parameters. As Theorem 2 shows, the loss is increasing for large $\alpha_j$. }
	\label{fig:thm2_illustration}
\end{figure}

\subsection*{Proof of Theorem \ref{thm:monotonicity_inf_reg}}
\begin{proof}
Consider $\mathcal{R}(\alpha_k)$ as a function of $\alpha_k$ for some $k\neq c$. Then, it may be decomposed as $\mathcal{R}(\alpha_k) = \mathcal{R}_k(\alpha_k) + \mathcal{R}_{\neq k}(\alpha_k)$ where
\begin{align*}
	\mathcal{R}_k(\alpha_k)        &= \frac{1}{2} (\alpha_k-1)^2 (\psi^{(1)}(\alpha_k)-\psi^{(1)}(\tilde{\alpha}_0)) \\
	\mathcal{R}_{\neq k}(\alpha_k) &= \frac{1}{2} \sum_{j\neq c, j\neq k} (\alpha_j-1)^2 (\psi^{(1)}(\alpha_j)-\psi^{(1)}(\tilde{\alpha}_0))
\end{align*}
The first term is an increasing function since $q(\alpha) = (\alpha-1)^2 (\psi^{(1)}(\alpha)-\psi^{(1)}(\alpha+z))$ is increasing for any $z>1$. The second term is also increasing since 
\begin{align*}
	\frac{\partial \mathcal{R}_{\neq k}(\alpha_k)}{\partial \alpha_k} &= \frac{-\psi^{(2)}(\tilde{\alpha}_0)}{2} \sum_{j\neq c, j\neq k} (\alpha_j-1)^2 \geq 0
\end{align*}
which follows from the integral representation $\psi^{(2)}(x) = -\int_{0}^\infty \frac{t^2 e^{-tx}}{1-e^{-t}} dt \leq 0$.
\end{proof}

\section*{Acknowledgements}
DISTRIBUTION STATEMENT A. Approved for public release. Distribution is unlimited. This material is based upon work supported by the Under Secretary of Defense for Research and Engineering under Air Force Contract No. FA8702-15-D-0001. Any opinions, findings, conclusions or recommendations expressed in this material are those of the author(s) and do not necessarily reflect the views of the Under Secretary of Defense for Research and Engineering.


\bibliography{refs}

\end{document}